
\documentclass{article}

\usepackage{microtype}
\usepackage{graphicx}
\usepackage{subfigure}
\usepackage{booktabs} 
\usepackage{hyperref}


\usepackage[accepted]{icml2022}


\usepackage{amsmath}
\usepackage{amssymb}
\usepackage{mathtools}
\usepackage{amsthm}
\usepackage{graphicx}
\usepackage{subfigure}
\usepackage{booktabs} 
\usepackage{bm}
\usepackage{physics}
\usepackage{float}
\usepackage{caption}
\usepackage{enumitem}
\usepackage{appendix}
\usepackage{amssymb}
\usepackage{amsthm}
\usepackage{amstext}
\usepackage{amsmath}
\usepackage{amsfonts}
\usepackage{paralist}
\usepackage{pdfpages}
\usepackage{makecell}
\usepackage{dsfont}
\usepackage{hyperref}
\newtheorem{theorem}{Theorem}[section]
\usepackage[capitalize,noabbrev]{cleveref}
\theoremstyle{plain}

\newtheorem{lemma}[theorem]{Lemma}

\theoremstyle{definition}

\theoremstyle{remark}

\usepackage[textsize=tiny]{todonotes}

\icmltitlerunning{3DLinker: An E(3) Equivariant Variational Autoencoder for Molecular Linker Design}

\begin{document}

\twocolumn[
\icmltitle{3DLinker: An E(3) Equivariant Variational Autoencoder\\ for Molecular Linker Design}



\icmlsetsymbol{equal}{*}

\begin{icmlauthorlist}
\icmlauthor{Yinan Huang}{bigai}
\icmlauthor{Xingang Peng}{thu}
\icmlauthor{Jianzhu Ma}{pku,bigai}
\icmlauthor{Muhan Zhang}{pku,bigai}
\end{icmlauthorlist}

\icmlaffiliation{bigai}{Beijing Institute for General Artificial Intelligence}
\icmlaffiliation{pku}{Institute for Artificial Intelligence, Peking University}
\icmlaffiliation{thu}{Tsinghua University}
\icmlcorrespondingauthor{Muhan Zhang}{muhan@pku.edu.cn}
\icmlcorrespondingauthor{Jianzhu Ma}{majianzhu@pku.edu.cn}

\icmlkeywords{Machine Learning, ICML}

\vskip 0.3in
]



\printAffiliationsAndNotice{}  

\begin{abstract}


Deep learning has achieved tremendous success in designing novel chemical compounds with desirable pharmaceutical properties. In this work, we focus on a new type of drug design problem---generating a small ``linker'' to physically attach two independent molecules with their distinct functions. The main computational challenges include: 1) the generation of linkers is \textit{conditional} on the two given molecules, in contrast to generating full molecules from scratch in previous works; 2) linkers heavily depend on the anchor atoms of the two molecules to be connected, which are not known beforehand; 3) 3D structures and orientations of the molecules need to be considered to avoid atom clashes, for which equivariance to E(3) group are necessary. To address these problems, we propose a conditional generative model, named 3DLinker, which is able to predict anchor atoms and jointly generate linker graphs and their 3D structures based on an E(3) equivariant graph variational autoencoder. So far as we know, there are no previous models that could achieve this task. We compare our model with multiple conditional generative models modified from other molecular design tasks and find that our model has a significantly higher rate in recovering molecular graphs, and more importantly, accurately predicting the 3D coordinates of all the atoms.
\end{abstract}

\section{Introduction}
\label{introduction}

The biological functions of most small molecule drugs are to inhibit the activity of the target protein by binding its active sites. In drug discover, designing new molecule drugs with desired pharmacophoric properties remains challenging due to the discreteness and enormity of the search space~\citep{polishchuk2013estimation}. To address this problem, many machine learning methods have been developed to embed molecules to a compact hidden space and hence make promising progresses in multiple downstream computational tasks such as molecular de-novo design, molecular optimization, and chemical property prediction. 

Molecules are generally represented by graphs with atoms and bonds represented as nodes and edges, respectively. Graph generative models~\citep{liu2018constrained, shi2019graphaf, jin2018junction, jin2020hierarchical} are commonly applied to model the marginal probability for FDA-approved drug molecules and it is expected that the newly sampled molecules from the model have similar or better pharmacophoric properties.

However, in complex diseases such as cancer, mutations of amino acids could significantly impact the binding affinity between drugs and target proteins. The drug might fall off the drug target when a particular amino acid mutates with a certain probability due to the weak binding affinity, which makes the patient drug resistant. To solve this problem, more recently, an alternative drug mechanism named Proteolysis targeting chimera (PROTAC) is developed to inhibit the protein functions by prompting complete degradation of the target protein. PROTAC is a unique molecule composed of two \textit{fragment} molecules and a \textit{linker} molecule: one fragment binds the target protein, the other fragment binds another molecule that can degrade the target protein, and the linker attaches the two fragments together. Because PROTAC needs only to bind their targets with high selectivity (rather than inhibit the target protein's activity), many efforts are devoted to retool previously ineffective inhibitor molecules as PROTAC for developing the next-generation drugs. Even though PROTAC owns promising potential, it has not been broadly pushed into clinical trial stages. One of the key challenges is the design of linker, which has critical influence on the ultimate degradation of target protein. To date, linker design still relies on the expertise of structural biologists and thus is very time intensive. Therefore, there are increasing efforts to develop deep learning methods to address linker design problems~\citep{imrie2020deep, yang2020syntalinker}.




A critical challenge for computational linker design stems from its strong 3D spatial constraints compared to classical graph generation tasks. It is known that a successful fragment linker should not disturb the spatial configurations of the two fragments~\citep{ichihara2011compound, klon2015fragment}. In addition, the anchors between fragments and linker also have correlations with their spatial poses. The linker design problem should be extended to include the 3D information, and a 3D-aware generative model is in need for the generation of realistic linkers, rather than invoke a graph generative model. In this paper, we propose a conditional generative model, named 3DLinker, that jointly models the 2D molecular graphs and 3D structures of linker for solving the 3D linker design problem. 

Given the graph and spatial coordinates of two fragments, 3DLinker can jointly generate graph and spatial coordinates of the linker. Particularly, it does not rely on pre-determined anchors and can accurately predict anchors based on the two observed fragments to be connected. More importantly, 3DLinker is able to predict 3D coordinates directly and at the same time keeps equivariant to rotations, translations and reflections, which makes it insensitive to choices of the coordinate system. Finally, since the generative model is based on the variational autoencoder (VAE) framework~\citep{kingma2013auto}, it can be used as an unsupervised representation learning method whose latent representations are fed into downstream tasks such as drug-likeness prediction. To the best of our knowledge, 3DLinker is the first trial that simultaneously predicts equivariant graph and 3D coordinates for the linker design problem.

\begin{figure}[t]
    \centering
    \includegraphics[page=1, scale=0.4]{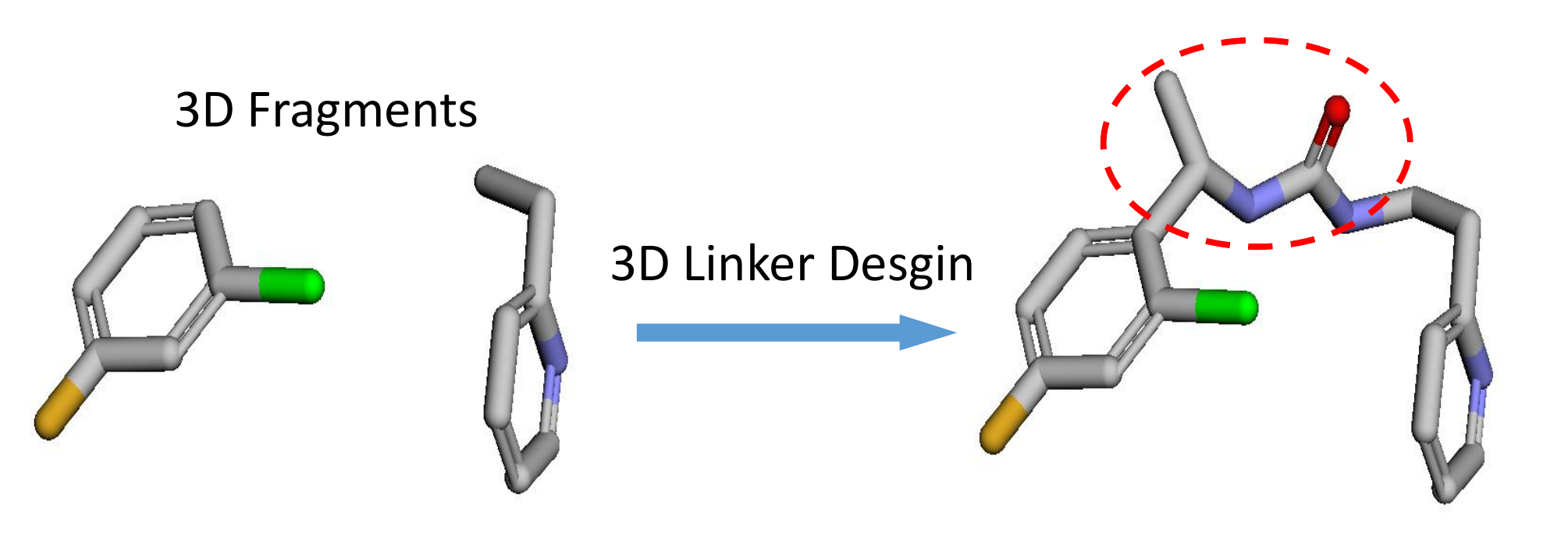}
    \vspace{-10pt}
    \caption{3D linker design problem: given two fragments' graph with 3D coordinates (left), the goal is to generate a linker graph with 3D coordinates to link these two fragments (right). The 3D coordinates of the generated linker must align with the two fragments, otherwise they cannot link.}
    \label{3D Linker Design}
\end{figure}

\section{Background}
\label{background}
In this section we introduce the definition of the 3D linker design problem and basic concepts of E(3) equivariance.
\subsection{3D Linker Design}
\label{background-3d_linker_design}
Linker design is to generate a small molecule that can link two given molecular fragments at certain anchors (binding atoms). Instead of modelling it as a 2D graph generation problem, it is important to take 3D information into account, since the designed fragment should satisfy spatial constraints such as not disturbing the relative poses or causing any atom clashes of the two fragments. Therefore, it requires the linker design algorithm to be able to generate both the chemical graph and 3D coordinates given the two fragment and their spatial coordinates (Figure~\ref{3D Linker Design}).

Mathematically, a molecule can be viewed as a graph with atoms as nodes and chemical bonds as edges. A 3D molecule can be represented by a graph $G=(\mathcal{V}, \mathcal{E}, X)$ with 3D coordinates $\bm{r}=(x, y, z)$, where $\mathcal{V}$ is the set of nodes, $\abs{\mathcal{V}}$ is the number of nodes, $\mathcal{E} \subset \mathcal{V} \times \mathcal{V}$ are edges, $X$ are node types and $\bm{R}$ is a matrix whose $i$-th row is $\bm{r}_i^{\top}$ (${\top}$ stands for transpose). In the 3D linker design, two fragments are defined by two unlinked subgraphs $G_{\text{F}}=(G_{\text{F},\text{1}}, G_{\text{F},\text{2}})$ with geometry $\bm{R}_{\text{F}}$, and a linker is denoted by $G_{\text{L}}$ with geometry $\bm{R}_{\text{L}}$. Let $G,\bm{R}$ be the graph and geometry of the ground truth linked molecule containing both fragments and linker. A 3D linker design model is a conditional generative model that completes the ground truth molecule graph as well as its geometry given the two fragments:
\begin{equation}
    p(G,\bm{R}|G_{\text{F}},\bm{R}_{\text{F}}).
\end{equation}

\subsection{O(3), E(3) Groups and Equivariance}
Group is a set of operations equipped with multiplications, associativity, the identity element and the inverse element. Group of all 3D rotations and reflections is called 3D Orthogonal Group or O(3), and group of all 3D rotations, translations and reflections is called 3D Euclidean Group or E(3). Let $\mathcal{X}$ be the input space, $\mathcal{Y}$ be output space and $\text{GL}(\mathcal{X})$ be all invertible linear transformations from $\mathcal{X}$ to $\mathcal{X}$ (similar for $\text{GL}(\mathcal{Y})$). A function $\phi:\mathcal{X}\to\mathcal{Y}$ is called \textit{equivariant} to the group $\mathcal{G}$, if for all group element $g\in\mathcal{G}$ and all $x\in\mathcal{X}$, there exists group representations $\pi^{\mathcal{X}}: \mathcal{G}\to \text{GL}(\mathcal{X})$ and $\pi^{\mathcal{Y}}: \mathcal{G}\to \text{GL}(\mathcal{Y})$ such that 
\begin{equation}
    \pi^{\mathcal{Y}}(g)\phi(x) = \phi(\pi^{\mathcal{X}}(g)x).
\end{equation}
If $\pi^{\mathcal{Y}}(g)$ is the identity function for all $g\in\mathcal{G}$, then we say $\phi$ is \textit{invariant} to group $\mathcal{G}$. 
In 3D linker design, it is known that a molecule graph $G$ should not depend on a specific coordinate system and $\bm{R}$ should change equivariantly to transformations of the coordinate system. Therefore it raises a constraint that for any $g\in E(3)$, the generative model $p(G, \bm{R}|G_{\text{F}},\bm{R}_{\text{F}})$ should satisfy
\begin{equation}
 p(G, \pi(g)\bm{R}|G_{\text{F}},\pi(g)\bm{R}_{\text{F}}) = p(G, \bm{R}|G_{\text{F}},\bm{R}_{\text{F}}),
 \label{eqv-condition}
\end{equation}
where $\pi(g)$ can by any rotation, translations or reflections matrix in the 3D space. 

\section{Related Work}
\label{related work}



\textbf{Graph-based Molecular Generation.} Variational Graph Auto-Encoders are the most popular models for molecular generation. Early approaches mainly focus on embedding the 2D chemical graphs into low dimensional space and sample new molecules by perturbing the hidden values. The representative works include GraphVAE~\citep{simonovsky2018graphvae}, CGVAE~\citep{liu2018constrained}, JT-VAE~\citep{jin2018junction}, GraphNVP ~\citep{madhawa2019graphnvp} and so on. Although none of these methods designs molecular linkers, graph-based VAE models serve as the basic building block of our entire architecture and all of these models could be plug into our framework by transforming the generative model to conditional generative model. For the perspective of training techniques, auto-regression are also widely adopted to train graph-based deep learning models, such as GraphRNN~\citep{you2018graphrnn}, DeepGMG~\citep{li2018learning}, GraphAF~\citep{shi2019graphaf}. Most of these model generate nodes and edges in a sequential manner.


\textbf{Point-cloud-based Molecular Models.} An important component of our work is to model and design the 3D structures of molecular fragments, in which maintaining the equivariant properties is the crucial computational challenge. One typical solution is to model the molecules as 3D point clouds using equivariant neural networks~\citep{schutt2017schnet, klicpera2020directional, liu2021spherical, satorras2021n, thomas2018tensor, fuchs2020se, deng2021vector, jing2020learning}. To train such models, auto-regression is a more common solution, such as G-SchNet~\citep{gebauer2019symmetry}, G-SphereNet~\citep{anonymous2022an}, but flow-based model ENF~\citep{satorras2021enf} and reinforcement learning ~\cite{simm2020reinforcement, simm2020symmetry}.could also be applied. The main limitation of point-cloud-based model is that they cannot directly generate discrete graph structures, which make it difficult to model chemical constraints like valency (maximal number of hydrogen atoms one can combine with).


\textbf{Molecular Linker Design.} DeLinker~\citep{imrie2020deep} is the first attempt to apply deep learning methods to the linker design problem. It constructs a conditional graph generative model that generates linker given two fragments. It adapts CGAVE~\citep{liu2018constrained}, generating edges step by step starting with fragments and two known anchor nodes as the binding sites. The spatial distance and angle between two fragments are provided to the model as side information to guide the generation. DEVELOP~\citep{imrie2021deep} improves DeLinker by encoding the spatial information of fragments using CNN. However, both of them only take (coarse) 3D information as input and do not have a precise atom-level description of molecule geometry, which are not sufficient to express the fragments geometry. In addition, anchor nodes are assumed to be known in advance, which is rare in real world application. Most importantly, they are only able to generate graph representations of linker without 3D coordinates.

\section{Methodology}
\label{method}
In this section, we present our 3DLinker, a conditional VAE-based generative model that generates both invariant graphs and equivariant absolute coordinates of linkers given two 3D fragments.  


 \textbf{Notations.} Let $G_{\text{F}}, G_{\text{L}}, G$ be graphs of fragments, linker and full molecule (ground truth) respectively, and similarly for coordinates $\bm{R}_{\text{F}}, \bm{R}_{\text{L}}, \bm{R}$ as in section \ref{background-3d_linker_design} . As we complete the full molecule graph step by step, we use $G_t$ and $\bm{R}_t$ to denote the current (existing) graph and coordinates at timestamp $t$, where $G_0 = G_\text{F}, \bm{R}_0 = \bm{R}_{\text{F}}$. The encoder embeds each node $i\in \mathcal{V}$ with both invariant features $h_i\in \mathbb{R}^{n_h}$ (for embedding the graph) and equivariant features $\bm{v}_i\in \mathbb{R}^{n_v\times 3}$ (for embedding the coordinates), which are further used for sampling invariant latent variables $z^h_i\in\mathbb{R}^{m_h}$ and equivariant latent variables $\bm{z^v}_i\in\mathbb{R}^{m_v\times 3}$. Symbols $h,\bm{v},z^h,\bm{z^v}$ without subscripts refer to that variable for all nodes in a general sense.  For column vectors $a\in\mathbb{R}^c$ and $b\in\mathbb{R}^c$, we use $a\odot b\in\mathbb{R}^c$ to denote point-wise multiplication and $\text{diag}\{a\}\in\mathbb{R}^{c\times c}$ to denote a matrix whose diagonal is $a$ and zero otherwise. 

\textbf{Equivariant Features For Coordinates Predictions.} The equivariant nature makes it difficult to predict absolute coordinates directly. Many of existing works~\citep{gebauer2019symmetry, anonymous2022an, xu2021end} tackle this problem by encoding coordinate information as invariant node features and predicting invariant quantities such as distances and edge angles. However, these indirect methods are either computationally intensive (need transformation to local coordinate system~\citep{anonymous2022an}) or introducing extra error from the second nonconvex optimization (for translating distance matrices into absolute coordinates~\citep{xu2021end}). Instead, we propose to directly generate absolute coordinates while preserving equivariance. 
To generate equivariant coordinates, only leveraging invariant features is not enough: we cannot produce an equivariant quantity arbitrarily by combining invariant quantities. Therefore, in addition to invariant node features, we need to introduce extra equivariant node features that can be directly used for composing equivariant coordinates. We use notations $h$ for invariant features and $\bm{v}$ for equivariant features. 

\textbf{Vector Neurons.} Classical fully connected neural networks or MLPs cannot preserve equivariance and thus is not suitable for transforming equivariant features $\bm{v}$. In this regard, vector neuron networks or VN-MLP~\citep{deng2021vector} propose a ReLU-like nonlinear function for equivariant features. Concretely, given an equivariant input $\bm{v}\in \mathbb{R}^{n_v\times 3}$, Vector-ReLU learns two weight matrices $W\in\mathbb{R}^{n_v^{\prime}\times n_v}$ and $U\in\mathbb{R}^{n_v^{\prime}\times n_v}$ to map $\bm{v}$ to output $\bm{v}^{\prime}\in\mathbb{R}^{n_v^{\prime}\times 3}$ via
\begin{subequations}
    \begin{align}
  \bm{q} & = W\cdot\bm{v}\in\mathbb{R}^{n_v^{\prime}\times 3}, \quad \bm{k} = U\cdot\bm{v}\in\mathbb{R}^{n_v^{\prime}\times 3},\label{vector neuron eqn1}\\
  \bm{v}^{\prime}& = \bm{q} -\text{diag}\left\{\mathds{1}_{\langle \bm{q},\bm{k}\rangle<0} \odot\langle \bm{q},\frac{\bm{k}}{\norm{\bm{k}}}\rangle\right\}\cdot \frac{\bm{k}}{\norm{\bm{k}}},\label{vector neuron eqn2}
\end{align}
\label{vector neuron}%
\end{subequations}
where $\langle \bm{q},\bm{k}\rangle\in\mathbb{R}^{n_v^{\prime}}$ is the inner product in the last axis, $\mathds{1}$ is the indicator function, and $\norm{\bm{k}}\in\mathbb{R}^{n_v^{\prime}}$ is the norm of $\bm{k}$ over the last axis. It is easy to verify that $\bm{v}^{\prime}$ is equivariant, since both $\bm{q}$ and $\bm{k}$ are linear combinations of equivariant input $\bm{v}$ while coefficients $\langle \bm{q},\bm{k}\rangle$ is invariant. Intuitively, Vector-ReLU projects $\bm{q}$ to the orthogonal plane of a learnable direction $\bm{k}$ if $\bm{q}$ lies in the other side of the plane, which is analogous to the cutoff in classic ReLU. This nonlinearity enhances the expressive power while preserving the equivariance. We use $\text{VN-MLP}$ to denote a neural network stacked by multiple Vector-ReLU units. 

\textbf{Mixed-Features Message Passing.} Now we are ready to introduce our Mixed-Features Message Passing (MF-MP) scheme. MF-MP performs message passing for invariant features $h$ and equivariant features $\bm{v}$ simultaneously, and in each step the two types of features are properly mixed so that 1) their respective invariance and equivariance properties are preserved, and 2) one type of feature helps the update of the other type and vice versa. 

In the first step, invariant features $h\in\mathbb{R}^{n_h}$ and equivariant features $\bm{v}\in\mathbb{R}^{n_v\times 3}$ are transformed and mixed to construct new expressive intermediate features $h^{\prime}, h^{\prime\prime}, \bm{v}^{\prime}$ by 
\begin{subequations}
\begin{align}
h^{\prime}_j &= \phi_{1}(h_j, \norm{\text{VN-MLP}_{1}(\bm{v_j})})\in\mathbb{R}^{n_h}, \label{eqn-a}\\ 
h^{\prime\prime}_j&=\phi_{2}(h_j, \norm{\text{VN-MLP}_{2}(\bm{v_j})})\in\mathbb{R}^{n_v},\label{eqn-b}\\
\bm{v}^{\prime}_j &=\text{diag}\{\phi_{3}(h_j)\}\cdot\text{VN-MLP}_{3}(\bm{v}_j)\in \mathbb{R}^{n_v\times 3}.\label{eqn-c}
\end{align}
\label{mixed features}%
\end{subequations}
Next, point convolution~\citep{thomas2018tensor, schutt2017schnet, schutt2021equivariant} is applied to linearly transform the mixed features $h^{\prime},h^{\prime\prime},\bm{v}^{\prime}$ into messages:
\begin{subequations}
\begin{align}
 m^h_{i\leftarrow j} &= \text{Ker}_{1}(\norm{\bm{r}_{i,j}})\odot h^{\prime}_j,\label{eqn-d} \\
\bm{m^v}_{i\leftarrow j}&=  \text{diag}\left\{\text{Ker}_{2}(\norm{\bm{r}_{i,j}})\right\}\cdot\bm{v}^{\prime}_j  \nonumber\\
& +\big(\text{Ker}_{3}(\norm{\bm{r}_{i,j}})\odot h^{\prime\prime}_{j}\big)\cdot\bm{r}^{\top}_{i,j},
\label{eqn-e} 
\end{align}
\label{message function}%
\end{subequations}
where $\bm{r}_{i,j}=\bm{r}_i-\bm{r}_j$ is the relative displacement, $\text{Ker}$ are learnable kernels such as RBFs that 
transform a scalar distance into a multi-dimensional output vector using different shape parameters, making the messages geometry-aware. Intuitively, it reflects discrete levels of physical interactions (short-range, long-range) in different distances. More details on Ker are given in Appendix \ref{point convolution}.

Finally, Gated Recurrent Units (GRU)~\citep{li2015gated} and VN-MLP are applied as powerful nonlinear transformations to update the node features with the messages:
\begin{subequations}
\begin{align}
\tilde{h}_{i}& = \text{GRU}(h_i, \sum_{j\in N(i)}m^h_{i\leftarrow j}), \label{agg-h}\\
\tilde{\bm{v}}_{i} &= \text{VN-MLP}_{4}(\bm{v}_i, \sum_{j\in N(i)}\bm{m^v}_{i\leftarrow j}).\label{agg-v}
\end{align}
\label{aggregation function}%
\end{subequations}
Here $N(i)$ stands for neighbors of $i$. Our Mixed-Features Message Passing (MF-MP) above effectively mixes invariant and equivariant features in each step to help the update of each other with powerful nonlinear functions. Proof of MF-MP's equivariance w.r.t. E(3) is included in Appendix~\ref{appendix equivariance}. We also discuss how it is related and different from Tensor Field Networks~\citep{thomas2018tensor} in appendix \ref{point convolution}. 

Now we describe details about the encoder and decoder of 3DLinker using MF-MP as building blocks. 3DLinker is a conditional latent generative model $p_{\theta,\psi}(G,\bm{R}|G_{\text{F}},\bm{R}_{\text{F}})$ including an encoder $q_{\psi}(z^h, \bm{z^{v}}|G_{\text{F}}, G, \bm{R}_{\text{F}}, \bm{R})$, a decoder $p_{\theta}(G, \bm{R}|G_{\text{F}},\bm{R}_{\text{F}},z^h,\bm{z}^{\bm{v}})$ and a prior $p(z^h,\bm{z}^{\bm{v}}|G_{\text{F}},\bm{R}_{\text{F}})$.
\begin{figure*}[ht!]
  \centering
  \includegraphics[page=1, scale=0.8]{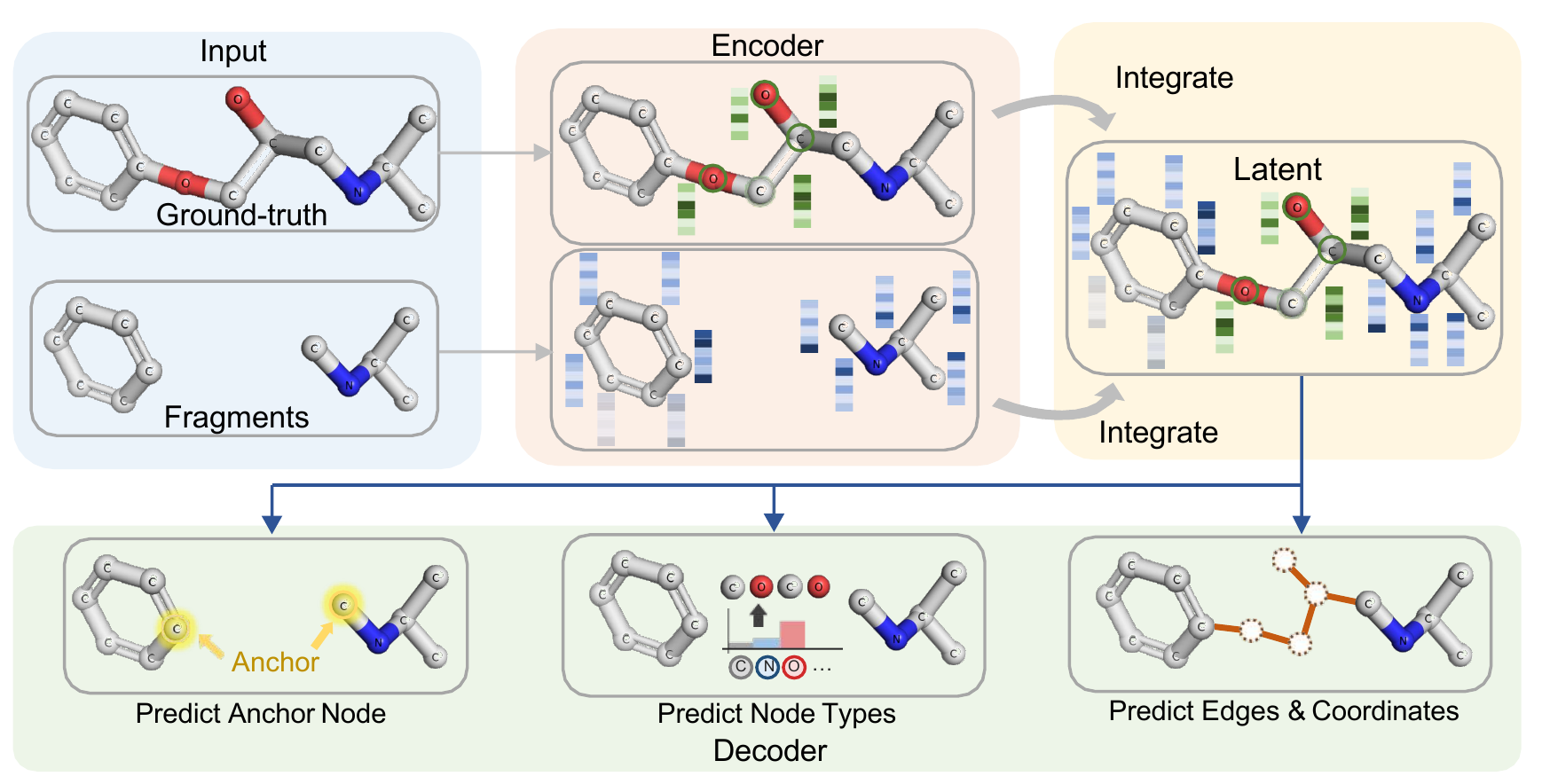}
  \vspace{-5pt}
  \caption{Illustration of overall encoding and decoding process. For encoding, ground truth is sent into a MF-MP encoder to get node-level representations. Those representations of nodes in fragments are discarded and replaced by representations that are computed separately on fragments graph only. For decoding, two anchor nodes are predicted as the binding sites for linker. Node Types of linker are simultaneously predicted before linking. With two anchor nodes and node types of linker, edges and coordinates are sequentially predicted, as demonstrated in Figure \ref{decoder}.}
  \label{vae}
\end{figure*}

\subsection{Encoder}
The encoder $q_{\psi}(z^h, \bm{z}^{\bm{v}}|G_{\text{F}}, G, \bm{R}_{\text{F}}, \bm{R})$ computes node-level latent distributions utilizing MF-MP. Initially, invariant features $h$ are embeddings of node types and we let equivariant features $\bm{v}=0$. After applying several times of MF-MP, we obtain the final node features $\tilde{h}$ and $\tilde{\bm{v}}$. The latent variables are sampled by $z^h_i{\sim} N(\mu_i^h, (\sigma_i^h)^2I)$, $\bm{z^v}_i{\sim} N(\bm{\mu^v}_i, (\sigma^{\bm{v}}_i)^2I)$ for linker nodes only, where the means and variances are computed from the final node features:
\begin{subequations}
\begin{align}
&\forall i\in \mathcal{V}_{\text{L}}, \nonumber \\
&\mu^h_i = \phi_{4}(\tilde{h}_i), \quad (\sigma^h_i)^2= \phi_{5}(\tilde{h}_i), \\
& \bm{\mu^v}_i=\text{VN-MLP}_{5}(\tilde{\bm{v}}_i),\quad (\sigma^{\bm{v}}_i)^2 = \phi_{6}(\tilde{h}_i).
\end{align}
\label{encoder}%
\end{subequations}
Note that for equivariant latent variables $\bm{z^v}$ the covariance $(\sigma^{\bm{v}})^2I$ assign the same variance to $x,y,z$ directions, which is the simplest way to preserve equivariance. Since fragments $(G_{\text{F}}, \bm{R}_{\text{F}})$ are given during generation, there is no need to sample their latent variables. Instead, we run the same (weight-sharing) MF-MP network again on fragments only, which gives another set of final node features $\hat{h}_i,\hat{\bm{v}}_i$ for fragment nodes. Latent variables of fragment nodes are deterministically obtained by
\begin{subequations}
\begin{align}
&\forall i\in \mathcal{V}_{\text{F}},\nonumber\\
&z^h_i =\phi_{7}(\hat{h}_i), \quad \bm{z^v}_i= \text{VN-MLP}_{6}(\hat{\bm{v}}_i).
\end{align}
\label{encoder-frag}%
\end{subequations}
\vspace{-20pt}
\subsection{Decoder}
The decoder $p_{\theta}(G, \bm{R}|G_{\text{F}},\bm{R}_{\text{F}}, z^h,\bm{z^v})$ constructs $(G,\bm{R})$ from fragments $(G_{\text{F}},\bm{R}_{\text{F}})$ in a sequential manner. In the decoding process we incorporate the valency rules of molecules by masking out impossible edges and anchor nodes. The decoding process consists of the following steps: 
\begin{compactenum}[(1)]
 \item \textbf{Anchor Node Prediction: }predict anchor nodes $a=(a_1,a_2)$ for the two fragments. These two anchor nodes are served as the binding points for linker to connect.
 \item \textbf{Node Type Prediction: }predict node type $X$ for all linker nodes.
 \item \textbf{Edge and Coordinate Prediction: }Put the two anchor nodes in a queue. Then do the following until the queue is empty:
    \begin{compactenum}[(i)]
        \item Pop a node $f$ from the queue and define it as the current \textit{focus node}. 
        \item Predict an edge between the focus node $f$ and another node. The connected node $i$ is added to the queue. If node $i$ is a linker node and is connected to the existing graph $G_t$ for the first time, predict its coordinates $\bm{r}_i$.
        \item Repeat (ii) and (iii) until an artificial stop node is connected. Update the coordinates of all nodes in the current linker. The focus node $f$ is then marked as closed, which cannot be added to the queue or connected anymore. Then go back to (i).
    \end{compactenum}
\end{compactenum}
Mathematically we factorize the joint probability into:
\begin{align}
        &p_{\theta}(G, \bm{R}|G_{\text{F}},\bm{R}_{\text{F}}, z^h, \bm{z^v})=p_{\theta}(\mathcal{E}, X, \bm{R}|\mathcal{E}_{\text{F}}, X_{\text{F}},\bm{R}_{\text{F}},z^h,\bm{z^v})\nonumber\\
     &=\underbrace{p_{\theta}(a_1, a_2|z^h, \bm{z^v})}_{\text{Anchor}}\cdot \underbrace{p_{\theta}(X|z^h)}_{\text{Node Types}}\nonumber\\
     &\cdot \underbrace{\prod_{t=0}^{T-1}p_{\theta}(\mathcal{E}_{t+1},\bm{R}_{t+1}|\mathcal{E}_t,\bm{R}_t, X, a_1, a_2, z^h,\bm{z^v})}_{\text{Edges and Coordinates}},
    \label{generative-model}
\end{align}
where $\mathcal{E}_T=\mathcal{E}$ and $\bm{R}_{T}=\bm{R}$. Details of each component are explained in the following.

\textbf{Anchor Node Prediction.}
To jointly predict two anchor nodes, we further factorize the joint probability into $p_{a_1}$, the probability of anchor $a_1$ on the first fragment $G_{\text{F},1}$, and $p_{a_2}$, the probability of anchor $a_2$ on the second fragment $G_{\text{F},2}$ conditioning on $a_1$:
\begin{equation}
\begin{split}
    & p_{\theta}(a_1,a_2|z^h, \bm{z^v}) =  p_{\theta}(a_1|\{z^h_i, \bm{z^v}_i|i\in \mathcal{V}_{\text{F}, 1})\\
    &~~~~~~~~~~~~~~~~~ \cdot  p_{\theta}  (a_2|z^h_{a_1},\bm{z^v}_{a_1}, \{z^h_i, \bm{z^v}_i|i\in \mathcal{V}_{\text{F},2}).
    \end{split}
    \label{anchor prediction}
\end{equation}
Concretely, each node on the first fragment will get a score $c_{i}=\phi_{8}(z^h_i, \norm{A_1\cdot\bm{z^v}_i})$, where $A_1 \in \mathbb{R}^{n_v \times n_v}$ is a learnable linear transformation. The scores $c_{i}$ are then passed to a softmax to compute the anchor probability for nodes of the first fragment:  $p_{a_1} = \exp(c_{a_1})/(\sum_{i\in \mathcal{V}_{\text{F},1}}\exp(c_{i}))$. Then the latent variables of this predicted anchor node as well as nodes of the second fragment are used to compute another group of scores $c^{\prime}_i = \phi_{9}(z^h_i, \norm{A_1\cdot\bm{z^v}_i}, z^h_{a_1}, \norm{A_1\cdot\bm{z^v}_{a_1}})$, and the probability of the second anchor is $p_{a_2} = \exp(c^{\prime}_{a_2})/(\sum_{i\in \mathcal{V}_{\text{F},2}}\exp(c_{i}^{\prime}))$.


\textbf{Node Type Prediction.}
Node types of linker are directly predicted using their latent variables. We leverage the self-attention mechanism~\citep{vaswani2017attention} to obtain new node features, which are then passed to an MLP to get the logits of node types. After node types are sampled, the types' embeddings are concatenated to the corresponding latent variables $z^h$ for latter procedures. 

\begin{figure}[ht!]
  \centering
  \includegraphics[page=1, scale=0.75]{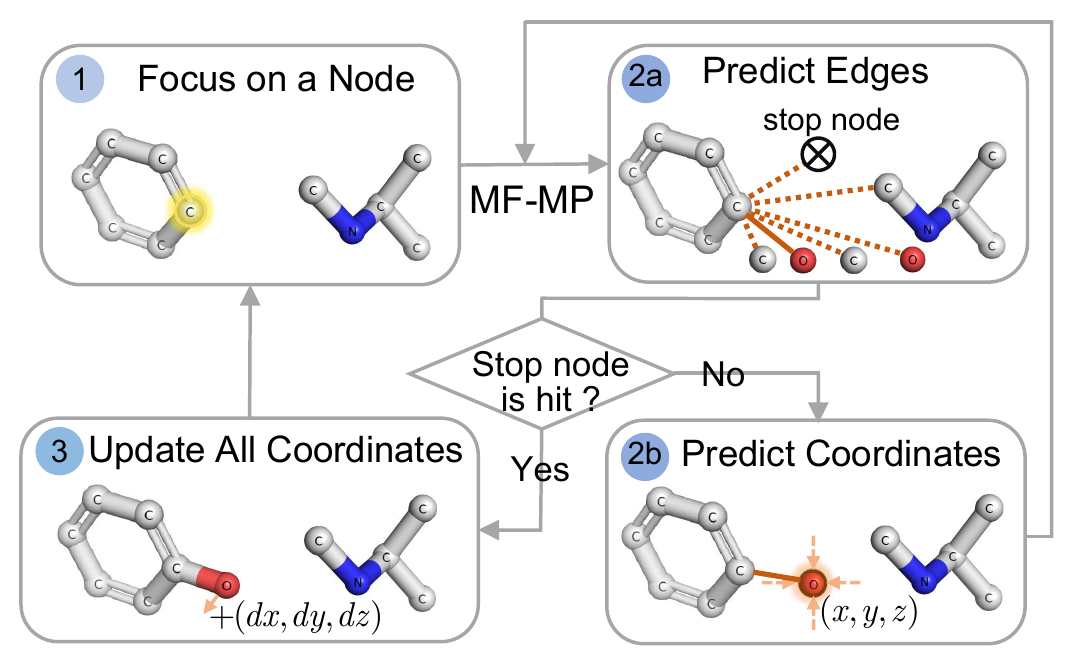}
  \caption{Illustration of sequential predictions of edges and coordinates. We first pick up on a node to focus. Then we sample an edge between focus node and other nodes (including an artifical stop node). If a linker node is first connected to the existing graph, its coordinates will be predicted. Each time before prediction MF-MP is applied to capture information from existing graph. We keep adding edges until stop node is selected, and then coordinates of all link nodes in the existing graph will be simultaneously updated. We then refocus on a new node and repeat. The procedure continues until all nodes in linker have been focused.}
  \label{decoder}
\end{figure}
\textbf{Edge and Coordinate Prediction.}
The edge and coordinate generation path $(\mathcal{E}_0,\bm{R}_0)\!\to\! (\mathcal{E}_1,\bm{R}_1)...\!\to\!(\mathcal{E},\bm{R})$ is defined by a sequence of node focusing, edge prediction and coordinate prediction procedures. The node focusing and edge connecting order is pre-determined by a Breath-First Search to enable teacher-forcing training. When a focus node $f$ is picked, we add new edges to it until connecting to the stop node. Concretely, at each step we first apply MF-MP (with different weights from that in the encoder) to obtain updated node features $\tilde{z}^h, \tilde{\bm{z}}^{\bm{v}}$ from the initial latent variables ${z}^h,{\bm{z}}^{\bm{v}}$. Then we compute the probability for edge $\mathcal{E}_{i,f}$ by 
\begin{align}
  s_{i,f}=\phi_{10}(\tilde{z}^h_i,  \tilde{z}^h_f, &\norm{A_2\cdot\tilde{\bm{z}}^{\bm{v}}_i}, \norm{A_2\cdot\tilde{\bm{z}}^{\bm{v}}_f},  \sum_{j\in \mathcal{V}}\tilde{z}^h_j, \sum_{j\in \mathcal{V}}z^h_j).\nonumber \\
  p(\mathcal{E}_{i,f}) &= \frac{\exp(s_{i,f})}{\sum_{j\in \mathcal{V}}\exp(s_{j,f})}.
\end{align}
To predict coordinates, we define  $\bm{\Omega}_i(\tilde{z}^h,\tilde{\bm{z}}^{\bm{v}}, \bm{R}_t, \bm{r})$ that outputs equivariant coordinates $\tilde{\bm{r}}_i$ for node $i$:
\begin{subequations}
\begin{align}
 p_{i,j}&=\phi_{11}(\tilde{z}^h_i, \tilde{z}^h_j, \langle A_{3}\cdot\tilde{\bm{z}}^{\bm{v}}_i, A_{4}\cdot\tilde{\bm{z}}^{\bm{v}}_j\rangle), \label{coordinates prediction-p}\\
 q_{i,j}&=\phi_{12}(\tilde{z}^h_i, \tilde{z}^h_j, \langle A_{5}\cdot\tilde{\bm{z}}^{\bm{v}}_i, A_{6}\cdot\tilde{\bm{z}}^{\bm{v}}_j\rangle),\label{coordinates prediction-q}\\
\tilde{\bm{r}}_i &= \bm{r}+\sum_{j\in \mathcal{V}_t} p_{i,j}(\bm{r}_j-\bm{r})\nonumber\\
&+\text{VN-MLP}_{7}\bigg(\sum_{j\in \mathcal{V}_t}q_{i,j}\text{VN-MLP}_{8}(\tilde{\bm{z}}^{\bm{v}}_i, \tilde{\bm{z}}^{\bm{v}}_j)\bigg) \label{coordinates predictions-Omega}
\end{align}
\label{coordinates prediction}%
\end{subequations}
for any generic coordinates $\bm{r}$ called reference point. Here $\text{VN-MLP}$ taking two inputs means concatenation along the first axis ($n_v$). The idea is to compute pair-wise interactions (\ref{coordinates prediction-p}, \ref{coordinates prediction-q}) and predict a deviation from reference point $\bm{r}$ (\ref{coordinates predictions-Omega}). If a linker node $i$ is first connected to the graph, we use the mass center of the current graph $\bar{\bm{r}}_t=\sum_{j\in \mathcal{V}_t}\bm{r}_j/\abs{\mathcal{V}_t}$ as the reference point and predict its absolute coordinates by $\bm{r}_i=\bm{\Omega}^{\text{pred}}_i(\tilde{z}^h,\tilde{\bm{z}}^{\bm{v}}, \bm{R}_t, \bar{\bm{r}}_t)$
; once the stop node is chosen, all linker nodes $i$ in the current graph will update their coordinates using their current coordinates as reference points, i.e., $\bm{r}_i^{\prime}=\bm{\Omega}^{\text{updt}}_i(\tilde{z}^h,\tilde{\bm{z}}^{\bm{v}}, \bm{R}_t, \bm{r}_i)$. Note that $\bm{\Omega}^{\text{pred}}$ and $\bm{\Omega}^{\text{updt}}$ have distinct network weights. 

\subsection{Training Using ELBO}
Variational autoencoder~\citep{kingma2013auto} is trained by maximizing the Evidence Lower Bound (ELBO):
\begin{equation}
\begin{split}
    L(\theta, \phi, \psi) &= \mathbb{E}_{z{\sim}q_{\psi}}\log p_{\theta}(G,\bm{R}|G_{\text{F}},\bm{R}_{\text{F}}, z^h, \bm{z^v}) \\
    &- \beta D_{\text{KL}}(q_{\psi}\lVert p),
    \end{split}
\end{equation}
where the prior $p(z^h,\bm{z^v}|G_{\text{F}}, \bm{R}_{\text{F}})$ simply takes standard Gaussian for all linker nodes. The reconstruction error term $\mathbb{E}_{z{\sim}q_{\psi}}\log p_{\theta}(G,\bm{R}|G_{\text{F}},\bm{R}_{\text{F}}, z^h,\bm{z^v})$ is approximated by one Monte-Carlo sampling, and we apply teacher forcing~\citep{kolen2001field} for anchor, node type and edge prediction following the pre-determined order. The loss of anchor node prediction, node type prediction and edge prediction are standard cross entropy while for loss of coordinate prediction we use log-MSE used by~\citep{yu2020tutorial}.

\subsection{Generation}
During generation, a maximum number of linker nodes is set and we sample this maximum number of latent variables $z^h,\bm{z^v}$ for linker nodes (though some nodes might never be included). Then the generation follows the same procedure as the decoder except that there is no teacher forcing.

\section{Experiments}
\label{experiment}
\begin{table*}[!ht]
    \centering
     \caption{Performance metrics for generated molecules.}
     \vspace{-8pt}
    \begin{tabular}{lcccc|cc}
    \Xhline{2.5\arrayrulewidth}
        Metrics & Valid (\%) & Recovered (\%) & Pass 2D filters (\%) & RMSD & Unique (\%) & Novel (\%) \\ \hline
        3DLinker (given anchor) & \textbf{99.20} & \textbf{94.69} & \textbf{90.35} & \textbf{0.079} & 29.24 & 32.21 \\
        3DLinker & 98.67 & 93.58 & \textbf{90.37} & \textbf{0.079} & 29.42 & 32.48 \\
        DeLinker+ConfVAE & 98.38 & 81.56 & 89.92 & 1.356 & 44.67 & 39.51 \\
        GraphAF+ConfVAE & 34.24 & 20.39 & 82.01 & 1.239 & 84.11 & \textbf{78.34} \\ 
        GraphVAE+ConfVAE  & 15.07 & 0.56 & 85.88  & 1.056 & \textbf{85.52} & 61.48 \\ 
    \Xhline{2.5\arrayrulewidth}
    \end{tabular}
    \label{exp-results-1}
\end{table*}

        

\subsection{Experiment Setup}
\textbf{Dataset.} To evaluate our model, we choose a subset of ZINC~\citep{sterling2015zinc}. For each molecule, we perform 20 times of MMFF force field optimization using RDKit~\citep{rdkit} and choose the one with the lowest energy as the ground truth. Following the same procedure from~\citep{hussain2010computationally}, the (fragments, linker) pairs are produced by enumerating all double cuts of acyclic single bonds that 
are not within any functional groups. In total, we obtain 365,749 (fragments, linker, coordinates) triplets and randomly split them into training (365,039), validation (351) and test (358). 

\textbf{Evaluation.} We evaluate the generated molecules for multiple 2D (graph) and 3D (coordinates) metrics, including the standard ones such as validity, uniqueness and novelty~\citep{brown2019guacamol}.  
In addition, we also evaluate the percentage of generated molecules passing 2D property filters, including synthetic
accessibility~\citep{ertl2009estimation}, ring aromaticity, and pan-assay interference compounds (PAINS)~\citep{baell2010new}. After filtering by validity and 2D property filters, recovery rate is calculated to report percentage of generated molecules that perfectly recover the ground truth molecule graphs. 
To evaluate quality of the 3D structures, the predicted 3D structures are compared to the ground truth using root-mean-square deviation (RMSD). Note that RMSD is only computed for generated molecules that perfectly recover the ground truth molecular graphs (including their isomorphic variants), since only recovered molecules have atom-to-atom alignment to ground truth. Following DeLinker, we compute another 3D metric, named shape-and-color similarity score ($\text{SC}_{\text{RDKit}}$).  
Appendix~\ref{appendix adapt} contains more details about the evaluation standards.

\textbf{Baselines.} Though there are works of molecular graph generative models and molecular geometry prediction given molecular graph, they rarely focus on either fragment linking or jointly modeling of both graph and geometry. Existing molecule generative models either do not work on conditional (linker) generation, or cannot predict 3D coordinates. Therefore, we implement multiple baselines by adapting multiple generative models to conditional generative models to generate 2D linker graphs given two molecular fragments. The generated graphs are then taken by a molecular geometry prediction model, ConfVAE~\citep{xu2021end}, to predict the 3D coordinates of each atom. Our baselines include DeLinker$+$ConfVAE, GraphAF$+$ConfVAE and GraphVAE$+$ConfVAE. DeLinker is an existing baseline for conditional linker generation. GraphAF is an autoregressive flow model, and GraphVAE is a VAE-based model with graph-level encodings. The latter two are adapted to conditional graph generation. For ConfVAE, we modify its decoder flow model to conditionally predict linker coordinates. Please see appendix \ref{appendix adapt} for implementation details. 


\begin{table}[t]
\resizebox{0.5\textwidth}{!}{
  \begin{minipage}[t]{0.55\textwidth}
    \centering
    \caption{$\text{SC}_{\text{RDKit}}$ score distribution ($\%$) and averaged score.}
    \vspace{-8pt}
    \begin{tabular}{lcccc}
    \Xhline{2.5\arrayrulewidth}
        Metrics  &  \multicolumn{4}{c}{$\text{SC}_{\text{RDKit}}$  Fragments}  \\
        ~&$>0.7$ & $>0.8$ & $>0.9$ & Average \\ \hline
        3DLinker (given anchor) & \textbf{43.10} & \textbf{16.09} & \textbf{2.60} & \textbf{0.684}\\
        3DLinker&  42.55 & 15.85 & 2.49 & 0.683\\
        DeLinker+ConfVAE &39.96&13.39&1.93&0.675 \\
        GraphAF+ConfVAE &19.33&3.36&0.32&0.624\\ 
        GraphVAE+ConfVAE &13.17 &2.15 &0.00 &0.601\\
    \Xhline{2.5\arrayrulewidth}
    \end{tabular}
    \label{exp-results-2}
\end{minipage}
}
\end{table}

\subsection{Results}
We trained 3DLinker for 20 epochs using Adam optimizer with  learning rate 0.006, batch size 48 and KL trade-off $\beta=0.6$. Training details for other baselines are included in Appendix \ref{appendix adapt}. Each model generates 250 samples per fragments, which leads to in total $250\times 358=89500$ samples. Note that since DeLinker takes anchor nodes as known ground truth, we add another comparison where anchor nodes are known to 3DLinker when generation, denoted as 3DLinker (given anchor). Results in Table \ref{exp-results-1} and Table \ref{exp-results-2} show that 3DLinker could generate valid and similar linker graph structures with a higher recovery rate, and at the same time achieve accurate predictions of the 3D coordinates of each atom. An interesting observation is that although focusing on 3D structures, 3DLinker achieves superior recovering accuracy of the 2D molecular graphs. Our interpretation is that incorporating 3D constraints benefits to the reconstruction of 2D graphs, though it might influence the diversity (low novelty and uniqueness). This also explains why the novelty and uniqueness are relatively low because the 3D constraints significantly reduce the valid chemical compound structures. In addition, GraphAF and GraphVAE are not able to obey valency rules during training, which explains their low valid and recovery rates. In terms of 3D structure predictions, the low RMSD of 3DLinker demonstrates the effectiveness of both equivariant features and coordinate update strategies. ConfVAE performs poorly since the error of distance matrix prediction scales up quickly with the number of nodes. 

\begin{figure*}[!ht]
  \centering
  \includegraphics[page=1, scale=0.5]{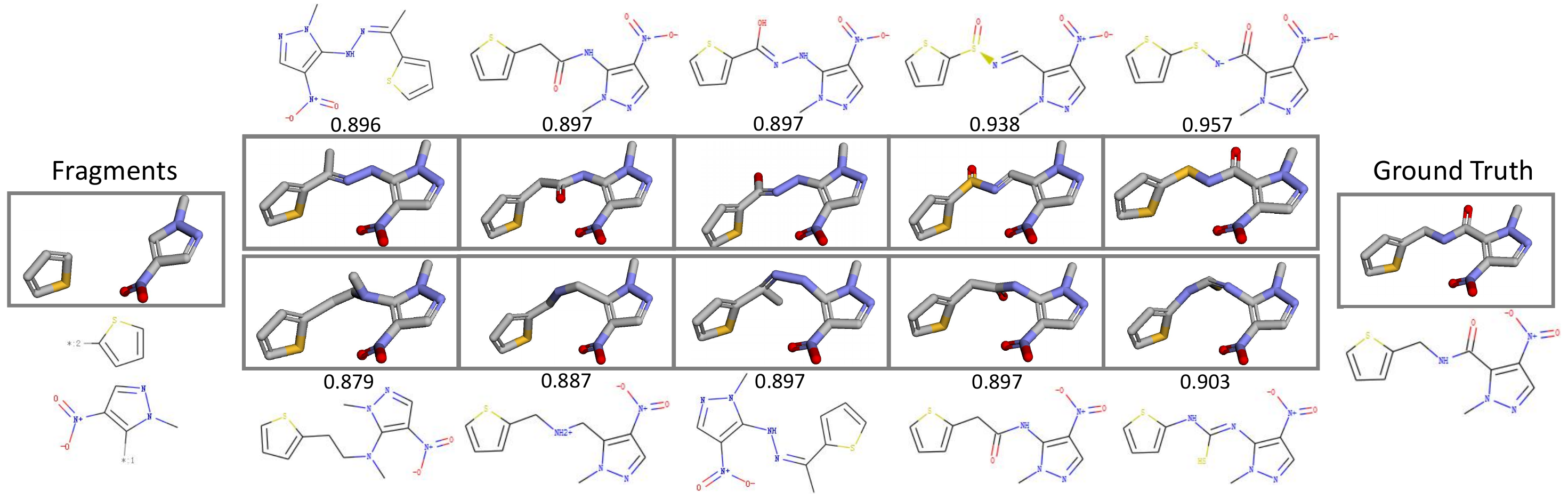}
  \vspace{-10pt}
  \caption{An example of fragment linking. The top-5 similar by $\text{SC}_{\text{RDKit}}$ Fragments proposed by 3DLinker (first row) and DeLinker+ConfVAE (second row) are shown. Generations from 3DLinker are more realistic and similar to ground truth in terms of $\text{SC}_{\text{RDKit}}$ and 3D geometry.}
  \label{case study}
\end{figure*}

\subsection{Ablation Study}
In the ablation study, we focus on two key components of 3DLinker: (1) equivariant features vs. invariant features alone; (2) update of all coordinates until the generation finishes, instead of fixing them in the 3D space one by one. Results display a significant decrease of performance after removing either equivariant features or coordinate update, especially for the RMSD and recovery rate. See Appendix \ref{ablation} for details.

\subsection{Molecular Property Prediction}
We show a downstream task of molecule property prediction. We use the learned latent variables from VAE models to predict the Quantitative Estimate
of Drug-Likeness (QED) by a gated sum:
\begin{equation}
    \text{QED} = \sigma\left(\sum_{i\in G}\sigma(\phi_{13}(z^h_i, \norm{V\bm{z^v}_i}))\phi_{14}(z^h_i, \norm{U\bm{z^v}_i})\right),
\end{equation}
where $\phi_{13}$ and $\phi_{14}$ are two separate neural networks, $V,U$ are two linear transform matrices and $\sigma(\cdot)$ is a sigmoid function. QED is widely adopted to quantify the potential for a small molecular to be a drug while the function of molecular linkers is to connect two existing drugs. Here we adopt QED to check whether 3DLinker produces biological and biochemical meaningful molecules.

\begin{table}[t]
\resizebox{0.5\textwidth}{!}{
  \begin{minipage}[t]{0.5\textwidth}
    \centering
    \caption{Root Mean Squared Error of QED prediction. }
    \begin{tabular}{lccccc}
    \Xhline{2.5\arrayrulewidth}
        Metrics  &  $\text{RMSE}_{\text{QED}}$\\ \hline
        3DLinker&  0.0833  &\\
        DeLinker+ConfVAE &0.1077\\
        GraphVAE+ConfVAE &0.1179\\
    \Xhline{1\arrayrulewidth}
    \end{tabular}
    \label{qed prediction}
\end{minipage}
}
\end{table}

As shown in Table \ref{qed prediction}, 3DLinker achieves the lowest Root Mean Square Error (RMSE) among all the models, suggesting a good expressing power of its learned latent representations.

\subsection{Visualization}
In the end, we present multiple examples of linker design by visualizations. In Figure \ref{case study}, we show the top-5 molecules with highest $\text{SC}_{\text{RDKit}}$ generated by 3DLinker (first row in the middle) and DeLinker+ConfVAE (second row in the middle). It is obvious that molecules from 3DLinker are generally more similar to the ground truth 2D chemical graph, and have a better spatial alignment with the ground truth 3D structure. 

\section{Conclusions}
\label{conclusions}
We developed 3DLinker, a conditional variational autoencoder that is able to jointly models graph and 3D representations, predict anchor nodes and samples linkers. Experiments show that 3DLinker is able to generate linkers with both high recovery rate and precise geometry.


There are still limitations to be considered in the future. First, models should be able to sample number of linker nodes instead of setting a maximal number of nodes in advance. Second, though it is known that spatial configuration of fragments should not be disturbed, in practice slight difference exist between different linkers, which is an avenue for future works to take into account.


\section*{Software and Data}
Codes and data are included in the supplementary materials.






\newpage
\onecolumn
\appendix
\section{Proof of Invariance/Equivariance}
\label{appendix equivariance}
In this section we prove that 3DLinker satisfies E(3) equivariance, namely for all $g\in \text{E}(3)$, we have $p_{\theta}(G,\pi(g)\bm{R}|G_{\text{F}},\pi(g)\bm{R}_{\text{F}})=p_{\theta}(G,\bm{R}|G_\text{F},\bm{R}_{\text{F}})$. For simplicity, we denote E(3) invariance and equivariance as E(3)-inv and E(3)-eqv, and O(3) invariance and equivariance as O(3)-inv and O(3)-eqv, and translational invariance as T-inv. 

We first prove $\tilde{h}$ is E(3)-inv while $\tilde{\bm{v}}$ is O(3)-eqv/T-inv after MF-MP (\ref{mixed features}, \ref{message function}, \ref{aggregation function})
\begin{lemma}
\label{my-lemma}
    In MF-MP, $\tilde{h}$ is E(3)-inv and $\tilde{\bm{v}}$ is O(3)-eqv, if $h$ is E(3)-inv and $\bm{v}$ is O(3)-eqv.
\end{lemma}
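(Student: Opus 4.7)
The plan is to propagate the invariance/equivariance statuses through the three sub-operations of MF-MP in sequence: the feature mixing in (\ref{mixed features}), the message construction in (\ref{message function}), and the aggregation in (\ref{aggregation function}). At every stage I would keep two bookkeeping facts about the vector features in mind: they are O(3)-equivariant (rotate/reflect with the frame) and translation-invariant (being non-positional). The scalar features carry a single fact: E(3)-invariance. The argument then reduces to checking, line by line, that each formula combines these building blocks in a type-correct way, using three elementary closure properties I would state up front: (i) the Euclidean norm sends O(3)-equivariant vectors to O(3)-invariant scalars; (ii) any $\phi_k$ acting only on invariant scalars yields invariant scalars; (iii) multiplying an O(3)-equivariant tensor on the left by an invariant diagonal matrix preserves O(3)-equivariance.

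For the mixing step, applying (i) gives that $\|\text{VN-MLP}_1(\bm{v}_j)\|$ and $\|\text{VN-MLP}_2(\bm{v}_j)\|$ are E(3)-invariant (the VN-MLPs are O(3)-equivariant by the paper's construction, and $\bm{v}_j$ is T-invariant by hypothesis). Applying (ii) then yields that $h'_j$ and $h''_j$ are E(3)-invariant. Applying (iii) to $\bm{v}'_j=\text{diag}\{\phi_3(h_j)\}\cdot\text{VN-MLP}_3(\bm{v}_j)$ gives that $\bm{v}'_j$ is O(3)-equivariant and T-invariant.

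For the message step, the key observation is that $\bm{r}_{i,j}=\bm{r}_i-\bm{r}_j$ is T-invariant and O(3)-equivariant, hence $\|\bm{r}_{i,j}\|$ is E(3)-invariant; consequently every $\text{Ker}_k(\|\bm{r}_{i,j}\|)$ is an E(3)-invariant vector. The scalar message $m^h_{i\leftarrow j}$ in (\ref{eqn-d}) is then a pointwise product of two invariants and is itself invariant. For $\bm{m}^{\bm{v}}_{i\leftarrow j}$ in (\ref{eqn-e}), I would handle the two summands separately: the first is an invariant diagonal applied to the equivariant $\bm{v}'_j$, so it remains O(3)-equivariant and T-invariant; the second is the outer product $(\text{Ker}_3(\|\bm{r}_{i,j}\|)\odot h''_j)\cdot\bm{r}^\top_{i,j}$, in which the scalar coefficient column is E(3)-invariant while $\bm{r}^\top_{i,j}$ is O(3)-equivariant and T-invariant, so the product is an $n_v\times 3$ tensor that is O(3)-equivariant in its last axis and T-invariant.

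For the aggregation step in (\ref{aggregation function}), summing over $N(i)$ preserves both properties because the group action is linear. $\text{GRU}$ is applied only to invariant arguments $h_i$ and $\sum_j m^h_{i\leftarrow j}$, so $\tilde{h}_i$ is E(3)-invariant; and $\text{VN-MLP}_4$ preserves O(3)-equivariance by construction, so $\tilde{\bm{v}}_i$ is O(3)-equivariant. The main obstacle, and the only place I expect to pause, is the bookkeeping of the outer-product term in (\ref{eqn-e}): under $R\in O(3)$, $\bm{r}_{i,j}\mapsto R\bm{r}_{i,j}$ so $\bm{r}^\top_{i,j}\mapsto \bm{r}^\top_{i,j}R^\top$, and I need to verify that this right-multiplication by $R^\top$ matches the convention for how entries of $\bm{v}\in\mathbb{R}^{n_v\times 3}$ rotate (also right-multiplication by $R^\top$, since the spatial axis is the last one). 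Once that convention is pinned down the remaining checks are routine, and chaining the three stages yields the lemma.
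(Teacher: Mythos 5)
Your proposal is correct and follows essentially the same route as the paper's proof: a stage-by-stage type check through the mixing equations (\ref{mixed features}), the message functions (\ref{message function}), and the aggregation (\ref{aggregation function}), using the facts that norms of O(3)-equivariant, translation-invariant vectors are E(3)-invariant, that relative displacements $\bm{r}_{i,j}$ are translation-invariant, and that scaling equivariant tensors by invariant coefficients preserves equivariance. Your extra care with the outer-product term in (\ref{eqn-e}) and the right-multiplication-by-$R^\top$ convention is a welcome refinement of the same argument, not a different one.
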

\begin{proof}
    First let us show that in (\ref{mixed features}), $h^{\prime},h^{\prime\prime}$ are E(3)-inv and $\bm{v}^{\prime}$ is O(3)-eqv/T-inv. Note that $\text{VN-MLP}(\bm{v})$ is E(3)-eqv and T-inv due to the properties of vector neurons, and thus the norm $\norm{\text{VN-MLP}(\bm{v})}$ is O(3)-inv and T-inv (because O(3) preserves inner product), namely E(3)-inv. Therefore in (\ref{eqn-a}, \ref{eqn-b}) $h^{\prime}$ and $h^{\prime\prime}$ are both E(3)-inv. Equation (\ref{eqn-c}) is essentially an O(3)-eqv $\text{VN-MLP}_{hv}(\bm{v}_j)$ scaled by E(3)-inv features $\text{diag}\{\phi_{hv}(h_j)\}$, and thus its output $\bm{v}^{\prime}_j$ is also O(3)-eqv/T-inv. 
    
    Then in message function (\ref{eqn-d}, \ref{eqn-e}), message $m^h$ is E(3)-inv since kernel $\text{Ker}$ is only a function of E(3)-inv distance $\norm{\bm{r}_{i,j}}$. For (\ref{eqn-e}), The first term of RHS is a O(3)-eqv/T-inv features $\bm{v}^{\prime}$ scaled by E(3)-inv kernel, which makes it O(3)-eqv/T-inv. Similarly for the second term is a O(3)-eqv/T-inv relative displacement $\bm{r}_{i,j}$ scaled by both E(3)-inv kernels and features. Therefore $m^h$ is E(3)-inv and $\bm{m^v}$ is O(3)-eqv/T-inv.
    
    Finally, in (\ref{agg-h}) $\tilde{h}$ is E(3)-inv since all its inputs are E(3)-inv. In (\ref{agg-v}) $\tilde{\bm{v}}$ is O(3)-eqv/T-inv due to vector neurons. 
\end{proof}

\begin{theorem}
The generative model $p_{\theta}(G,\bm{R}|G_{\text{F}},\bm{R}_{\text{F}})$ (\ref{encoder}, \ref{encoder-frag}, \ref{generative-model}) satisfies equivariance condition (\ref{eqv-condition}). 
\end{theorem}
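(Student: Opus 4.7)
The plan is to decompose the marginal
\[
p_\theta(G,\bm{R}\mid G_F,\bm{R}_F) = \int p_\theta(G,\bm{R}\mid G_F,\bm{R}_F, z^h,\bm{z^v})\, p(z^h_L,\bm{z^v}_L)\, dz^h_L\, d\bm{z^v}_L,
\]
with $(z^h_F,\bm{z^v}_F)$ deterministic functions of $(G_F,\bm{R}_F)$ through (\ref{encoder-frag}), and then to reduce the claim to a pointwise identity for the decoder density. By Lemma \ref{my-lemma} applied to the fragment-only MF-MP pass, $\hat{h}_i$ is E(3)-invariant and $\hat{\bm{v}}_i$ is O(3)-equivariant and T-invariant; the post-processing in (\ref{encoder-frag}) then makes $z^h_F$ E(3)-invariant and $\bm{z^v}_F$ O(3)-equivariant and T-invariant, so under $\bm{R}_F\mapsto\pi(g)\bm{R}_F$ we have $(z^h_F,\bm{z^v}_F)\mapsto(z^h_F,\pi(g)\bm{z^v}_F)$. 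Combining this with the substitution $\bm{z^v}_L\mapsto\pi(g)\bm{z^v}_L$ inside the integral (unit Jacobian because $\pi(g)$ acts orthogonally on each node's 3D slot, and the standard Gaussian prior is invariant), it suffices to prove the pointwise decoder identity
\[
p_\theta(G,\pi(g)\bm{R}\mid G_F,\pi(g)\bm{R}_F, z^h,\pi(g)\bm{z^v}) = p_\theta(G,\bm{R}\mid G_F,\bm{R}_F, z^h,\bm{z^v}).
\]

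I would verify this identity factor by factor along (\ref{generative-model}). The anchor and node-type factors depend only on E(3)-invariant quantities ($z^h_i$ and $\norm{A_1\bm{z^v}_i}$ for the former, $z^h$ only for the latter), and are therefore immediately unchanged. For the edge/coordinate loop I proceed by induction on the timestep $t$ with hypothesis $\bm{R}_t\mapsto\pi(g)\bm{R}_t$, which holds at $t=0$ since $\bm{R}_0=\bm{R}_F$. Applying Lemma \ref{my-lemma} to the decoder's MF-MP on the current graph yields E(3)-invariant $\tilde{z}^h$ and O(3)-equivariant/T-invariant $\tilde{\bm{z}^v}$, which makes the edge scores $s_{i,f}$ — built from invariant features, norms, and graph-level sums — invariant, hence the edge distribution unchanged. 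For the coordinate map $\bm{\Omega}_i$ in (\ref{coordinates prediction}), the weights $p_{i,j},q_{i,j}$ are E(3)-invariant (inner products of O(3)-equivariant vectors are O(3)-invariant and automatically T-invariant), the affine term $\bm{r}+\sum_j p_{i,j}(\bm{r}_j-\bm{r})$ is a weighted average of equivariant positions with invariant weights and hence E(3)-equivariant, and the $\text{VN-MLP}_7$ term is O(3)-equivariant and T-invariant; their sum is E(3)-equivariant in the inputs $(\tilde{\bm{z}^v},\bm{R}_t,\bm{r})$. With an isotropic noise model (so the coordinate likelihood depends only on $\norm{\bm{r}_i-\bm{\Omega}_i}$), the coordinate factor is invariant under the joint transformation, which then propagates to $\bm{R}_{t+1}\mapsto\pi(g)\bm{R}_{t+1}$ and closes the induction.

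The main obstacle I anticipate is the coordinate step, where the two variants $\bm{\Omega}^{\text{pred}}$ (reference $\bar{\bm{r}}_t$) and $\bm{\Omega}^{\text{updt}}$ (reference $\bm{r}_i$) use different reference points; one must check that each reference is itself E(3)-equivariant in the current state (the mass center of an equivariant set of positions is equivariant, and the current coordinate trivially so under the induction hypothesis) and that the likelihood applied at that reference is isotropic so the density is preserved. The remaining work is bookkeeping: propagate invariance/equivariance through compositions, close the induction over $t$, and then appeal to the change-of-variables identity in the prior integral to conclude.
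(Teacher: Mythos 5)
Your proof is correct and follows essentially the same route as the paper's: apply Lemma~\ref{my-lemma} to conclude that $z^h$ is E(3)-invariant and $\bm{z^v}$ is O(3)-equivariant/T-invariant, observe that the anchor, node-type and edge factors depend only on invariant quantities, and check that $\bm{\Omega}_i$ is E(3)-equivariant because it adds an O(3)-equivariant/T-invariant deviation to an E(3)-equivariant reference point. You additionally make explicit three points the paper leaves implicit --- the change of variables in the latent integral against the isotropic Gaussian prior, the induction over generation timesteps $t$, and the isotropy of the coordinate likelihood --- which strengthens rather than alters the argument.
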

\begin{proof}
    By lemma \ref{my-lemma}, the encoded latent variables $z^h$ and $\bm{z^v}$ from (\ref{encoder}, \ref{encoder-frag}) is E(3)-inv and O(3)-eqv/T-inv respectively. In the decoding process, anchor nodes, node types and edges are all predicted from $z^h$ or norm of $\bm{z^v}$. Thus the probability of graph $G$ is E(3)-inv. The coordinates are predicted through (\ref{coordinates prediction}). Note that $p_{i,j}$, $q_{i,j}$ in (\ref{coordinates prediction-p}, \ref{coordinates prediction-q}) are E(3)-inv and $\bm{r}$ in (\ref{coordinates predictions-Omega}) is E(3)-eqv (mass center), which implies terms $\sum_{j\in \mathcal{V}_t} p_{i,j}(\bm{r}_j-\bm{r})$ and $\text{VN-MLP}_{7}\left(\sum_{j\in \mathcal{V}_t}q_{i,j}\text{VN-MLP}_{8}(\tilde{\bm{z}}^{\bm{v}}_i, \tilde{\bm{z}}^{\bm{v}}_j)\right)$ are all O(3)-eqv/T-inv. Finally we can conclude that $\bm{\Omega}_i$ is E(3)-eqv because an O(3)-eqv/T-inv quantity pluses an E(3)-eqv quantity $\bm{r}$ results in an E(3)-eqv quantity). Therefore $\bm{R}$ is E(3)-eqv.
\end{proof}

\section{Point Convolution}
\label{point convolution}
In the message function~(\ref{message function}), kernel $\text{Ker}(\norm{\bm{r}_{i,j}})$ assigns weights relying on distance $\norm{\bm{r}_{i,j}}$. Concretely in our implementation, our kernel first applies  Gaussian functions with $10$ different means and perform a learnable affine transform: 
\begin{equation}
    \text{Gaussian}: \norm{\bm{r}_{i,j}}\mapsto \begin{pmatrix}
    \exp\{-k(\norm{\bm{r}_{i,j}} - \mu_1)\}\\
    \exp\{-k(\norm{\bm{r}_{i,j}} - \mu_2)\}\\
    ...\\
    \exp\{-k(\norm{\bm{r}_{i,j}} - \mu_{10})\}\\
    \end{pmatrix},\quad \text{Ker}(\norm{\bm{r}_{i,j}}) = \text{Linear}(\text{Gaussian}(\norm{\bm{r}_{i,j}}))+\text{Bias}.
\end{equation}
where $\mu_1<\mu_2<...<\mu_{10}$ are hyper-parameters and $k$ is a learnable parameter. The intuition behind is to capture the intensity of interactions between nodes (atoms) with different distances. The largest mean $\mu_{10}$ is basically the maximal correlation length: if two nodes are separated by distance beyond $\mu_{10}$, their interaction is nearly neglectable.

Mathematically, message functions~(\ref{message function}) can be seen as tensor products using spherical harmonics, as described in Tensor Field Networks~\citep{thomas2018tensor}. In group representation theory, invariant features $h$ and equivariant features $\bm{v}$ are called type-0 and type-1 tensors respectively, and the theory~\citep{griffiths2018introduction} tells us the correct way to construct new type-0 features $\tilde{h}$ or type-1 features $\tilde{\bm{v}}$ using type-$l$ spherical harmonics $Y^{l}(\bm{r})$ and $h,\bm{v}$. In our case since we only have type-0 and type-1 features, $Y^{0}(\bm{r})\propto 1$ and $Y^{1}\propto\bm{r}$ are all we need, which explains the design of~(\ref{message function}).

Note that there are several differences between our method and tensor products in Tensor Field Networks (TFN). First TFN is based on SO(3) equivariant, while we seek for O(3) equivariant. So terms like cross product are discarded since they violate mirror symmetry. Besides, we mix different types of features before convolution, in contrast to convolution on raw features. Finally, TFN uses simple activation functions like scaling with norm, while we leverage Vector Neuron for novel non-linearity.

\section{Experiment Details}
\label{appendix adapt}

\textbf{Some evaluation metrics.}
Validity is defined by percentage of generated molecules that both obey chemical constraints (valency) and successfully links two fragments into connected graphs. Invalid molecules are discarded for the following evaluations. Uniqueness means percentage of non-duplicate generated molecules: $\frac{\text{len}(\text{Set}(\text{generated molecules}))}{\text{len}(\text{generated molecules})}$. Novelty refers to percentage of generated molecules whose linkers are not present in training set. 

$\text{SC}_{\text{RDKit}}$ uses two RDKit built-in functions as described in~\citep{putta2005conformation} and ~\citep{landrum2006feature} to compute color similarity scores between 
two 3D molecules based on the overlap of
their pharmacophoric features. And the shape similarity
score is a simple volumetric comparison between the two
3D molecules. Both scores are between 0 (no match) and
1 (perfect match), which are averaged to produce a final score
between 0 and 1. Scores above 0.7 indicate a good match,
while scores above 0.9 suggest an almost perfect match.
Following DeLinker, $\text{SC}_{\text{RDKit}}$ is measured only on fragments (we re-generate their coordinates together with the linker), which embodies the capability to generate linkers without disturbing fragments. 

\textbf{3DLinker.} We train 3DLinker for 20 epochs with kl trade-off beta 0.6. Note that the anchor node prediction~\ref{anchor prediction} is asymmetric to the permutation of $a_1$ and $a_2$, and thus we apply a permutation of two fragments to enhance our model.

\textbf{GraphAF.}
Originally GraphAF is an autogressive flow model $p(G)=\prod_t p(G_{t+1}|G_t)$. To model a conditional probability $p(G|G_{\text{F}})$, all we need is to mask out loss of $G_{\text{F}}$ and only compute loss starting from $G_{\text{F}}$ to $G$. We trained GraphAF for 170 epochs, and other hyper-parameters are consistent with its source code (\href{https://github.com/DeepGraphLearning/GraphAF}{https://github.com/DeepGraphLearning/GraphAF}).

\textbf{GraphVAE.}
GraphVAE represents a graph $G$ as node types $F$, adjacency $A$ and edge types $\mathcal{E}$ and maps them into a graph-level representation $z$. Then $z$ is decoded into $\tilde{F},\tilde{A},\tilde{\mathcal{E}}$ with an additional graph matching to compute loss. To modify it into a conditional generative models, a naive approach is to build a generative model for linker $G_{\text{L}}$ only, and then predict the anchor nodes that connects to fragments. Concretely, let $a_1,a_2$ be anchor nodes of fragments, and $b_1, b_2$ be the corresponding anchor nodes of linker, the decoder model $p(G|G_{\text{F}},z)$ is:
\begin{equation}
    p(G|G_{\text{F}}, z) = p(G_{\text{L}}|z_{\text{F}}, z)p(a_1,a_2,b_1,b_2|z_{\text{F}}, z)p(\mathcal{E}_{a_1,b_1},\mathcal{E}_{a_2,b_2}|z_{\text{F}}, z),
\end{equation}
where $z_F$ is a graph-level encoding of fragments and $\mathcal{E}_{a_1,b_1},\mathcal{E}_{a_2,b_2}$ are two edges that connects fragments and linker. The realization is based on code provided by \href{https://github.com/snap-stanford/GraphRNN}{https://github.com/snap-stanford/GraphRNN}. In experiments, all hyper-parameters are unchanged except KL trade-off beta is $0.6$.

\textbf{ConfVAE.}
ConfVAE is a VAE for geometry generation given graph $p(\bm{R}|G)$. It encodes graph $G$ and distance matrix $D=\{d_{i,j}=\norm{\bm{r}_i-\bm{r}_j}|i,j\in G\}$ into latent variables $z$, and uses a flow model $f$ to decode distance matrix $D=f(G,\bm{z})$. Concretely, its flow $f$ is 
\begin{equation}
    D = f(G,z) = D(0) + \int_{0}^{t}g_{\theta}(G, D(\tau), z)d\tau,
\end{equation}
where $g_{\theta}$ is a Message Passing Neural Networks (MPNN) and $D(0)\sim N(0,I)$ is a base distribution. To transfer $p(\bm{R}|G)$ to a conditional model $p(\bm{R}|G,\bm{R}_0)$, we modify the flow to
\begin{equation}
    \begin{pmatrix}
    D_0 \\ D_{L}
    \end{pmatrix} = f(G, z, D_0) = \begin{pmatrix}
    D_0 \\ D_{L}(0)
    \end{pmatrix} + \int_{0}^{t}\begin{pmatrix}
    0 & 0 \\ 0 & 1
    \end{pmatrix}g_{\theta}(G, D_0, D_L(\tau), z)d\tau
    \label{first opt}
\end{equation}
where $D_0=\{d_{i,j}=\norm{\bm{r}_i-\bm{r}_j}|i,j\in G_0\}$ are distances we already knew while $D_L=\{d_{i,j}=\norm{\bm{r}_i-\bm{r}_j}|i\in G-G_0 \text{ or }j\in G-G_0\}$ are distances between nodes with at least one is in linker. After distance matrix $D$ is predicted, we transform it into absolute coordinates by optimizing the following:
\begin{equation}
    \min_{\{\bm{r}_i|i\in G-G_0\}} \sum_{i,j\in G}(D_{i,j}-\norm{\bm{r}_i-\bm{r}_j})^2.
    \label{second opt}
\end{equation}
Note that we only need to optimize $\{\bm{r}_i|i\in G-G_0\}$ since $\bm{R}_0=\{\bm{r}_i|i\in G_0\}$ are given. Also there is no need for coordinates alignment since coordinate system is well-defined by coordinates of fragments $\bm{R}_0$. Code is provided by \href{https://github.com/MinkaiXu/ConfVAE-ICML21}{https://github.com/MinkaiXu/ConfVAE-ICML21}. Note that although ConfVAE can be trained in an end-to-end manner (both equation \ref{first opt} and \ref{second opt}), it only makes a little improvements ($0.01$) compared to training the flow alone (see the experiments of its orignal paper \cite{xu2021end}). Thus we choose only to train the flow model alone in our experiments. For each graph, we sample one geometry which is optimized by (\ref{second opt}) with 10 times random initialization and 300 steps gradient descent.

\section{Ablation Study}
We conduct ablation study on two aspects: removing equivariant features and coordinates update strategy. We train these three models for 20 epochs and evaluate them by the same methods in previous experiments. Results are shown in Table \ref{ab-results-1} and \ref{ab-results-2}. 
\label{ablation}
\begin{table*}[!ht]
    \centering
    \caption{Ablation Study. (eqv-) stands for removing equivariant features while (update-) means removing coordinates update strategy.}
      \vspace{-8pt}
    \begin{tabular}{lcccc|cc}
    \Xhline{2.5\arrayrulewidth}
        Metrics & Valid (\%) & Recovered (\%) & Pass 2D filters (\%) & RMSD & Unique (\%) & novel (\%) \\ \hline
        3DLinker & 98.67 & 93.58 & 90.37 & 0.079 & 29.42 & 32.48 \\
         3DLinker (eqv-)  & 99.42 & 86.59 & 92.68 & 1.352 & 34.58 & 27.02 \\ 
        3DLinker (update-)  & 98.85 & 39.94 & 62.81 & 0.399 & 55.93 & 72.25 \\ 
    \Xhline{2.5\arrayrulewidth}
    \end{tabular}
    \label{ab-results-1}
\end{table*}
\begin{table*}[!ht]
    \centering
    \caption{Ablation study. (eqv-) stands for removing equivariant features while (update-) means removing coordinates update strategy.}
    \vspace{-8pt}
    \begin{tabular}{lcccc}
    \Xhline{2.5\arrayrulewidth}
        Metrics  &  \multicolumn{4}{c}{$\text{SC}_{\text{RDKit}}$  Fragments}  \\
        ~&$>0.7$ (\%) & $>0.8$ (\%) & $>0.9$ (\%)& Average \\ \hline
        3DLinker &  42.55 & 15.85 & 2.49 & 0.683 \\
        3DLinker (eqv-)  &38.51&13.15&1.76&0.672 \\ 
        3DLinker (update-)  &37.34 & 10.87& 1.13& 0.670 \\
    \Xhline{2.5\arrayrulewidth}
    \end{tabular}
    \label{ab-results-2}
\end{table*}
We can see a dramatic drop of performances after moving either equivariant features or coordiantes update. Especially, both these two modules contribute greatly to the prediction of coordinates, leading to a decrease of RMSD by 1.3 and 0.3 respectively. Also it is interesting to see that coordinates update has a significant impact on graph quality. A possible reason is that updating the coordinates results in a flexible intermediate coordinates, which may increase the expressive capacity of features. In some sense it is similar to EGNN~\citep{satorras2021n}, who also update intermediate coordinates in the forward pass.
\end{document}